\def\paragraph#1{\subsubsection*{#1}}
\let\originalleft\left
\let\originalright\right
\renewcommand{\left}{\mathopen{}\mathclose\bgroup\originalleft}
\renewcommand{\right}{\aftergroup\egroup\originalright}
\newtheorem{theorem}{Theorem}[section]
\newtheorem{lemma}[theorem]{Lemma}
\newtheorem{definition}[theorem]{Definition}
\theoremstyle{definition}
\renewcommand\log{\ln}
\newcommand{\vX}{\vec X}
\newcommand{\vk}{\vec k}
\newcommand{\hamming}{\mathrm{d}_H}
\renewcommand{\epsilon}{\eps}
\newcommand\vY{\vec Y}
\newcommand\PSI{\vec\psi}
\renewcommand{\vec}[1]{\boldsymbol{#1}}
\newcommand\SIGMA{\vec\sigma}
\newcommand\G{\mathcal{G}}
\newcommand\cH{\mathcal{H}}
\newcommand\cK{\mathcal{K}}
\newcommand\eps{\varepsilon}
\newcommand\Erw{\mathbb{E}}
\newcommand{\vecone}{\vec{1}}
\newcommand{\set}[1]{\left\{#1\right\}}
\newcommand{\Bin}{{\rm Bin}}
\newcommand{\Be}{{\rm Be}}
\newcommand\bc[1]{\left({#1}\right)}
\newcommand\cbc[1]{\left\{{#1}\right\}}
\newcommand\brk[1]{\left\lbrack{#1}\right\rbrack}
\newcommand\abs[1]{\left|{#1}\right|}
\newcommand\pr{\mathbb{P}} 
\renewcommand\Pr{\pr}
 \def\G{{\vec G}}
\def\pr{{\mathbb P}}
\def\cH{{\mathcal H}}
\newcommand{\remove}[1]{}
\newcommand{\fp}{{\mathcal{P}}}
\newcommand{\fn}{{\mathcal{N}}}
\newcommand{\XI}{\vec{\Xi}}
\def\CC{C\nolinebreak[4]\hspace{-.05em}\raisebox{.4ex}{\relsize{-1}{\textbf{++}}}}
\def\?#1{}
\def\whp{w.h.p\@ifnextchar-{.}{\@ifnextchar.{.\?}{\@ifnextchar,{.}{\@ifnextchar){.}{\@ifnextchar:{.:\?}{.\ }}}}}}
\def\Whp{W.h.p\@ifnextchar.{.\?}{\@ifnextchar,{.}{.\ }}}
\newcommand{\DeclareMathActive}[2]{%
  \expandafter\edef\csname keep@#1@code\endcsname{\mathchar\the\mathcode`#1 }
  \begingroup\lccode`~=`#1\relax
  \lowercase{\endgroup\def~}{#2}%
  \AtBeginDocument{\mathcode`#1="8000}%
}
\newcommand{\std}[1]{\csname keep@#1@code\endcsname}
\patchcmd{\newmcodes@}{\mathcode`\-\relax}{\std@minuscode\relax}{}{\ddt}
\def\dense{}%
\author{Max Hahn-Klimroth, Dominik Kaaser, Malin Rau}
\address{maximilian.hahnklimroth@tu-dortmund.de, TU Dortmund University, Germany}
\address{dominik.kaaser@tuhh.de, TU Hamburg, Germany}
\address{malin.rau@uni-hamburg.de, Universität Hamburg, Germany}
\thanks{The research was supported by the German Research Council, grant DFG FOR 2975.}
\title[Efficient Approximate Recovery from Pooled Data]{Efficient Approximate Recovery from Pooled Data Using Doubly Regular Pooling Schemes}
\begin{document}

\begin{abstract}

In the pooled data problem we are given $n$ agents with hidden state bits, either $0$ or $1$.
The hidden states are unknown and can be seen as the underlying \emph{ground truth} $\sigma$.
To uncover that ground truth, we are given a querying method that queries multiple agents at a time. 
Each query reports the sum of the states of the queried agents.
Our goal is to learn the hidden state bits using as few queries as possible.

So far, most literature deals with \emph{exact reconstruction} of \emph{all} hidden state bits.
We study a more relaxed variant in which we allow a small fraction of agents to be classified incorrectly.
This becomes particularly relevant in the \emph{noisy} variant of the pooled data problem where the queries' results are subject to random noise.
In this setting, we provide a doubly regular test design that assigns agents to queries.
For this design we analyze an approximate reconstruction algorithm that estimates the hidden bits in a greedy fashion.
We give a rigorous analysis of the algorithm's performance, its error probability, and its approximation quality.
As a main technical novelty, our analysis is uniform in the degree of noise and the sparsity of $\sigma$.
Finally, simulations back up our theoretical findings and provide strong empirical evidence that our algorithm works well for realistic sample sizes.
\end{abstract}

\maketitle

\allowdisplaybreaks
\section{Introduction}

In this paper we consider the \emph{pooled data problem} with \emph{additive queries}, defined as follows.
We are given $n$ agents $x_1, \dots, x_n$.
Each agent $x_i$ has a \emph{hidden state bit} $\sigma_i \in \set{0,1}$.
The vector $\SIGMA \in \cbc{0,1}^n$
is called the \emph{ground truth} and is given as a sequence on $n$ independent $\Be(p)-$coin-flips.
The goal is to identify the agents with bit one, i.e., infer the ground truth.
To this end, we can \emph{query} sets of agents:
each query pools a certain number of agents together, measures their hidden states, and returns the sum of the queried agents' states.
Our task is two-fold. 
First, we have to design the \emph{query graph} $G$ that assigns agents to queries.
Second, we have to devise an algorithm that reconstructs the agents' bits given this pooling design and the queries' results.

There is a substantial body of work on the pooled data problem and the related problem of \emph{group testing} in many communities, including statistical physics \cite{alaoui_2017, alaoui_messagepassing}, information theory \cite{AJS_book, tan_2022, coja_spiv}, machine learning \cite{NIPS2014_fb8feff2,kong2012automatic}, and distributed computing \cite{gebhard_2022_ipdps, hahnklimroth_2022_icdcs}. However, all these related works typically consider the problem of \emph{exact reconstruction}. Coming from a more applied direction, we observe that in many practical applications (e.g., in medicine or machine learning) queries are typically subject to (random) errors.
Therefore in this paper, we analyze a relaxation: \emph{approximate} recovery. 
Our goal is to correctly identify \emph{most} of the agents' bits: we allow a fraction of $\epsilon k $ many agents to be classified incorrectly for some small $\epsilon > 0$, where $k = \abs{\abs{\SIGMA}}_1$ is the number of agents with bit one.
As we will see, approximate recovery can be done much more efficiently than exact recovery given realistic sample sizes.

\subsection{Background}
The problem fits well into the so-called
\emph{teacher-student model} introduced by \citet{Gardner_1989}. 
This model provides the fundamental means of our analysis.
The setup is the following: a teacher aims to convey some \emph{ground truth} to a student.
Rather than directly providing the ground truth to the student, the teacher generates observable data from the ground truth via some statistical model and passes both the data and the model to the student.
The student now aims to infer the ground truth from the observed data and the model.

In many real-world scenarios the time to reconstruct the ground truth from the queries' results is dominated by the time to perform a single query.
For instance, in the setting of a life-sciences laboratory queries may be performed by automated pipetting machines that pool samples together and run an automated bio-medical test such as DNA screening \cite{cao_2014, sham_2002} or PCR tests \cite{BENAMI20201248}.
In a technological setting, queries may involve computations  by a neural network on GPUs in a GPU cluster system \cite{liang2021neural, martins_2014, NIPS2014_fb8feff2}.
Finally, various tasks in computational biology \cite{du2000combinatorial,cao_2014,sham_2002}, traffic monitoring \cite{wang_2015}, or confidential data transfer \cite{adam_1989,dinur_2003} use decoding techniques from pooled data.
Since these biological or technological processes are typically quite time-consuming and in order to obtain a substantial speed-up we focus on \emph{non-adaptive} schemes where all queries are specified upfront and executed simultaneously in parallel.

In both the technological setting and the life-sciences setting we observe that queries are subject to random noise.
Therefore we consider the noisy channel model introduced by \citet{hahnklimroth_2022_icdcs} and assume that the agents' hidden bits are subject to random bit flips when read by the queries.
In particular, we assume that with a certain probability $S(1,0)$ a one-bit is read but reported as zero (\emph{false negative}), and with a certain probability $S(0,1)$ a zero-bit is read but reported as one (\emph{false positive}).
Our model also includes the Z-channel with $S(0,1)=0$ where only false negatives occur, i.e., errors of the form $1 \rightarrow 0$.
The Z-channel is particularly important from a practical point of view: it captures the fact that typically in many applications the false-positive probability is insignificant~\cite{constantin1979theory,zhou2013nonuniform}.

\subsection{Related Work}
The pooled data problem finds its roots in the 1970s \cite{djackov_1975} and has since then been frequently studied. 
This reconstruction task of the ground truth is commonly studied from two angles. 
From an \emph{information-theoretic} point of view one asks for the minimum number of queries that are necessary and sufficient such that, given unlimited computational power, inference is possible with high probability. 
From an \emph{algorithmic} point of view one asks for an efficient decoding algorithm. 
In both settings we distinguish between \emph{exact} inference where an algorithm (efficient or not) reconstructs the ground truth completely and \emph{approximate} inference where all but $\eps np$ entries \mbox{are correctly recovered.}

In the literature it is common to distinguish between the \emph{linear regime} ($p = \Theta(1)$) and the \emph{sublinear regime} ($p = n^{-(1 - \theta)}$ for some $\theta \in (0,1)$). 
Typically, techniques for those regimes vary dramatically and the critical phase-transition $\theta \to 1$ is not studied.
Furthermore, many related works \cite{alaoui_2017, alaoui_messagepassing, gebhard_2022_ipdps, hahnklimroth_2022_icdcs, hahnklimroth2021near} assume not only knowledge of $p$ but also of $k$, the exact number of agents with bit one.
In the absence of noise knowing $k$ or knowing $p$ is essentially equivalent: the number of agents with bit one is strongly concentrated around the mean, and a single global query outputs the exact value of $k$. 
Under noise, however, things become much more complicated.

\citet{djackov_1975} provided an information-theoretic lower bound $m_{PD}$ which states that, even when given unlimited computational power, no algorithm can infer the ground truth with less than
\begin{align*}
    m_{PD} \sim 2 n \cH \bc{\frac{k}{n}} \frac{ \log \frac{n}{k} }{ \log k}
\end{align*}
queries. Here, $\cH(\alpha) = - \alpha \log(\alpha) - (1-\alpha) \log(1 - \alpha)$ denotes the entropy.
In the early 2000s, \citet{grebinski_2000} proved that exponential-time constructions can solve the pooled data problem with no more than $2 m_{PD}$ queries.
More recently it was established by \citet{scarlett_2017} (for the case $k = \Theta(n)$) and by \citet{gebhard_2022_ipdps, feige2020quantitative} (for the case $k \sim n^{\theta}$) that $m_{PD}$ queries suffice from an information-theoretic point of view.

Less is known from the algorithmic perspective.
For a long time, only algorithms that exceed $m_{PD}$ by a factor of $\Omega(\log n)$ were known:
in the linear regime message-passing algorithms were proposed, and non-rigorous ideas from statistical physics suggest that those algorithms require $\Theta(n)$ queries \cite{alaoui_messagepassing}. 
In the sublinear regime ideas from coding theory \cite{karimi_2019_2, karimi_2019}, thresholding algorithms \cite{gebhard_2022_ipdps, hahnklimroth_2022_icdcs} and algorithms for group testing \cite{AJS_book} were studied, all of which require $O \bc{ k \log n }$ queries. 
Recently an efficient algorithm that requires $O\bc{k}$ queries was proposed \cite{hahnklimroth2021near}, closing the $\log n$-gap in the sublinear regime in the absence of noise. While this algorithm is efficient from a theoretical point of view, the proofs and techniques used in the construction require tremendously large values of $n$. In the linear regime this gap is still an open question \cite{feige2020quantitative}.

Most contributions use an almost regular random graph as a pooling design.
This means that either queries select agents uniformly at random, or agents select tests uniformly at random or, sometimes, the mapping is based on independent coin-flips. 
For large values of $n$, all those approaches are expected to become equivalent on dense pooling designs due to strong concentration properties.
However, on small instances, fluctuations in the number of queries per agent or the number of agents per query might contribute to the performance of algorithms. 
Works that explicitly focus on small values of $n$ are only known in the related \emph{group testing} problem. 
In group testing, a query returns the information whether at least one agent in the query has bit one. 
The findings suggest that almost-regular designs perform best if the fluctuations are small and queries are not too similar \cite{tan_2022,Noonan_2021}.

To the best of our knowledge, all rigorous algorithmic results with respect to the pooled data problem concern \emph{exact} recovery. 
Rigorous results with respect to information-theoretic aspects are studied by \citet{scarlett_2017}, and heuristics from statistical physics suggest that approximate recovery is as hard as exact recovery \cite{alaoui_messagepassing}. 
Nevertheless, in the group testing problem, the class of low-degree polynomials (including important classes of algorithms such as local algorithms) are understood with respect to approximate recovery~\cite{aco_low_degree_gt_colt}.

\subsection{Our contribution}
We study an algorithm based on the thresholding approach by \citet{gebhard_2022_ipdps, hahnklimroth_2022_icdcs} for the approximate recovery problem under noise and for a broad range of prior probabilities $p$.
More precisely, we only require $p \gg n^{-1} \log n$, and we explicitly do not require knowledge of $k$, the number of agents with bit one. 
We give an explicit bound on the number of queries required for the algorithm to recover all but $\eps np$ agents correctly with probability at least $1 - \delta - o(1)$. 
To this end we modify the pooling design proposed by \citet{gebhard_2022_ipdps, hahnklimroth_2022_icdcs} such that all agents are in the same number of queries and each query contains the same number of agents (up to $\pm 1$ due to divisibility conditions). 
This doubly regular design is harder to study since it introduces stochastic dependencies. However, as our simulations show, the design is superior when the number of agents is small and the queries are subject to noise.

\paragraph{Outline.}
In the next section we formally introduce our pooling design, present our reconstruction algorithm, and state our main theorem. Then in \cref{sec:simulations} we present implementation details and empirical results of our simulations.
The formal analysis of our main result is presented in \cref{sec:analysis}.
Finally, in \cref{sec:discussion} we discuss our findings and conclusions, and we present some open problems.

\section{Model and Contribution} \label{sec:model}

Let $x_1, \ldots, x_n$ denote the $n$ agents such that any agent has bit one with probability $p$ independently from each other.
We let $\vec k$ denote the (random) number of agents with hidden bit one, thus $\Erw \brk{\vec k} = np$. 
We assume in the following that $p = \omega \bc{n^{-1} \log n}$.

Recall that the pooling design is defined via a bipartite \mbox{(multi-)} graph that assigns agents to queries.
In our pooling design we assume that the number of agents per query $\Gamma$ is fixed, and we define the following pooling design $\G_\Gamma$.
Given the number of queries $m$ and the number of agents per query $\Gamma$ with $ n^{\delta} \sqrt{n (mp)^{-1}} \leq \Gamma \leq n^{1 - \delta}, \delta > 0, $ we set the sequence of agent-degrees $\Delta_1, \ldots, \Delta_n$ such that 
\[
\abs{ \Delta_i - \Delta_j } \leq 1 \quad \text{and} \quad \smash{\sum_{i=1}^n} \Delta_i = m \Gamma.    
\]
With these agent degrees we let $\G = \G_\Gamma$ be a  bipartite multi-graph chosen uniformly at random chosen according to the configuration model (see, for instance \cite{Janson2000}). An example of such a pooling design can be seen in \cref{fig:figure_pooling}. Given $\G$, let $\Delta_i^\star$ denote the number of distinct queries agent $i$ is part of. 
Furthermore, let $\Delta = n^{-1} \sum_{i=1}^n \Delta_i$ and $\Delta^\star = n^{-1} \sum_{i=1}^n \Delta_i^\star$ denote the average agent-degrees. 

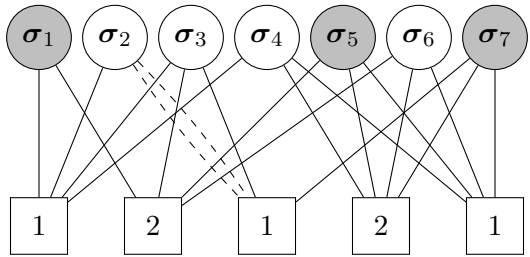
\begin{figure}[t]
\centering
\begin{tikzpicture}[scale=1]
\node[circle, draw, minimum width=0.75cm, fill=black!25] (x0) at (0, 0) {$\SIGMA_1$};
\node[circle, draw, minimum width=0.75cm] (x1) at (1,0) {$\SIGMA_2$};
\node[circle, draw, minimum width=0.75cm] (x2) at (2, 0) {$\SIGMA_3$};
\node[circle, draw, minimum width=0.75cm] (x3) at (3, 0) {$\SIGMA_4$};
\node[circle, draw, minimum width=0.75cm, fill=black!25] (x4) at (4, 0) {$\SIGMA_5$}; 
\node[circle, draw, minimum width=0.75cm] (x5) at (5, 0) {$\SIGMA_6$};
\node[circle, draw, minimum width=0.75cm, fill=black!25] (x6) at (6, 0) {$\SIGMA_7$};

\node[rectangle, draw, minimum width=0.75cm, minimum height=0.75cm] (a1) at (0, -2.5) {$1$};
\node[rectangle, draw, minimum width=0.75cm, minimum height=0.75cm] (a2) at (1.5, -2.5) {$2$};
\node[rectangle, draw, minimum width=0.75cm, minimum height=0.75cm] (a3) at (3, -2.5) {$1$};
\node[rectangle, draw, minimum width=0.75cm, minimum height=0.75cm] (a4) at (4.5, -2.5) {$2$};
\node[rectangle, draw, minimum width=0.75cm, minimum height=0.75cm] (a5) at (6, -2.5) {$1$};

\path[draw] (x0) -- (a1);
\path[draw] (x0) -- (a2);
\path[draw] (x1) -- (a1);
\path[-] (x1) edge [bend left=5,dashed] (a3);
\path[-] (x1) edge [bend right=5,dashed] (a3);
\path[draw] (x2) -- (a1);
\path[draw] (x2) -- (a3);
\path[draw] (x2) -- (a2);
\path[draw] (x3) -- (a1);
\path[draw] (x3) -- (a4);
\path[draw] (x3) -- (a5);
\path[draw] (x4) -- (a2);
\path[draw] (x4) -- (a5);
\path[draw] (x4) -- (a4);
\path[draw] (x5) -- (a2);
\path[draw] (x5) -- (a4);
\path[draw] (x5) -- (a5);
\path[draw] (x6) -- (a3);
\path[draw] (x6) -- (a4);
\path[draw] (x6) -- (a5);
\end{tikzpicture}
\caption{A small example with $n = 7$ agents and ground truth $\SIGMA = (1,0,0,0,1,0,1)$. A gray vertex color indicates that the agent has bit one. The underlying multi-graph defines the queries. Ultimately, the goal is to reconstruct $\SIGMA$ as well as possible given $\G$ and the queries' results. }
\label{fig:figure_pooling}
\end{figure}
\begin{algorithm}[t]
\small
\DontPrintSemicolon
\SetAlgoVlined
\SetKwFor{ForParallel}{for}{do in parallel}{end}
\SetKwFor{ForAt}{at}{do}{end}
\SetKw{KwTo}{to}
\SetKw{KwWith}{with}
\SetKwFunction{Query}{query}
\SetKwProg{Initially}{I.~Perform Measurements}{}{end}
\SetKwProg{Reconstruct}{II.~Output Estimate}{}{end}
\Initially{}{
add $m$ query nodes to the network\;
\ForAt{query node $a_j$}{
    sample agents $\cbc{v_1, \dots, v_\Gamma}$ u.a.r.\ with replacement\;
    measure $\hat \SIGMA_j \simeq \sum_{i = 1}^{\Gamma} \SIGMA(v_i)~$ 
    \tcp{\parbox{0pt}{\parbox{5cm}{$\hat \SIGMA_j$ {\normalfont{}is subject to noise!}}}}
    send message $\hat \SIGMA_j$ to each distinct neighbor\; 
}
\ForAt{agent $x_i$ \KwWith incoming message $\hat \SIGMA_j$}{
    update score $\Psi_i \gets \Psi_i + \hat \SIGMA_j$\;
}
}
\Reconstruct{}{
    \ForAt{agent $x_i$ \KwWith score $\Psi_i$}{
    \parbox{2.5cm}{\textbf{if} $\Psi_i \leq T$ \textbf{then}} agent $x_i$ outputs $0$\;
    \parbox{2.5cm}{\textbf{else} } agent $x_i$ outputs $1$\;
}
}
\caption{Reconstruction Algorithm with threshold $T$}
\label{algo_mn}
\end{algorithm}

\subsection{Thresholding Algorithm}
We study a distributed combinatorial reconstruction algorithm that makes use of the following observation: if an agent has hidden bit one it increases the queries' results every time it is queried.
The expected sum of an agent's queries' results is therefore larger if the agent has bit one.
The expected difference in this \emph{score} between agents with bit one and agents with bit zero becomes larger if more queries are conducted. 
Therefore, the algorithm uses a threshold value $T$ and declares all agents with a score larger than $T$ as having bit one.  
This algorithm was originally introduced to study the noiseless variant of the pooled data problem for exact recovery \cite{gebhard_2022_ipdps} and was later studied for the purpose of exact recovery under noise~\cite{hahnklimroth_2022_icdcs}.
It is formally specified in \cref{algo_mn}.

\subsection{Formal Results}
Our main theorem includes results for a general noise model, the noisy channel model. It is formally defined as follows.
\begin{definition}[Noisy-Channel]
Under the noisy channel a query reads the bit of an agent according to the following noise channel with probability given by $S$, where
\begin{align*}
    S(i,j) = \Pr \bc{ \text{bit } j \text{ is read } \mid \text{bit } i \text{ was sent } }.
\end{align*}
We assume that all entries of $S$ are real values not dependent on $n$ with $S(1,1) - S(0,1) > 0$.
\end{definition}
The last assumption simply says that reading a single bit as one is more likely if the true signal was present as well. 
From here on, we let $\G$ be an instance of the almost $(\Gamma, \Delta)$-doubly regular pooling design.
Next, we properly define what is meant by \emph{approximate reconstruction}.
Fix a failure probability $\delta$ and an approximation fraction $\eps > 0$.
\begin{definition}[$\eps$-recovery]
An algorithm is said to be an $\eps$-recovery algorithm if, given the pooling design $\G$ and the queries' results $\hat \SIGMA$, it outputs an estimate $\tilde \sigma$ such that $\hamming(\SIGMA, \tilde \sigma) \leq 2 \eps \abs{\abs{\SIGMA}}_1$.
\end{definition}
Observe that, if $\abs{\abs{\SIGMA}}_1 = \abs{\abs{\tilde \sigma}}_1$, an $\eps$-recovery algorithm declares at most $\eps \vk = \eps \abs{\abs{\SIGMA}}_1$ agents with bit one to have bit zero and vice versa.
Our main result states an upper bound on the required number of queries such that \cref{algo_mn} achieves $\eps$-recovery with failure probability $\delta + o_n(1)$. Here, $o_n(1)$ denotes a function that tends to \mbox{zero with $n \to \infty$.}
\begin{theorem} 
\label{thm_mainresult}
Let $S$ be the channel matrix and let \def\lefttag#1{\tag*{\makebox[0pt][l]{\hspace*{-\linewidth}\textit{#1}}}}
\begin{align*}
L = L(n, p, S) =\frac{\bc{ S(1,1) - S(0,1) }^2}{ 2 n \bc{ S(0,1) + p (S(1,1) - S(0,1)) }}
\intertext{and}
 \smash[t]{T^\star = \Delta S(1,1) - \bc{\frac{1}{2} - \frac{\log \bc{\frac{1}{p}}}{2 L m} } \Delta (S(1,1) - S(0,1)).}
\end{align*}
Then \cref{algo_mn} with threshold $T^\star$ is with probability $1 - \delta - o_n(1)$ an $\eps$-recovery algorithm on the almost $(\Gamma, \Delta)$-doubly regular pooling design if the number of queries $m$ is at least 
\begin{align*}
    m\!>\! 
    \frac{1}{L}\left(\ln\left(\frac{1}{p}\right) + 2\ln\left(\frac{2}{\eps\delta}\right) + 2\sqrt{\ln\left(\frac{2}{\eps\delta}\right)\ln\left(\frac{2}{\eps\delta p}\right)}\right).
\end{align*}

\end{theorem}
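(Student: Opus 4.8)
The plan is to study the score $\Psi_i$ of a single agent conditionally on its own bit $\sigma_i$, to show that the two conditional laws are sharply concentrated and separated by a gap growing with $m$, and then to place $T^\star$ so that the two misclassification probabilities are balanced against the (very different) numbers of bit-one and bit-zero agents. First I would fix $x_i$ and write $\Psi_i=\sum_{j\ni i}\hat\SIGMA_j$ as the sum of the $\Delta_i\approx\Delta$ noisy reads of $x_i$ itself plus the $\approx\Delta(\Gamma-1)$ reads of the other agents in its queries. Only the self-part sees $\sigma_i$, contributing conditional mean $\Delta S(\sigma_i,1)$; every other slot holds a $\Be(p)$-agent read through $S$, with mean $q:=S(0,1)+p(S(1,1)-S(0,1))$. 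Hence the conditional means $\mu_b:=\Erw[\Psi_i\mid\sigma_i=b]$ differ by the signal gap $g:=\mu_1-\mu_0=\Delta(S(1,1)-S(0,1))$, while the fluctuations are governed by the background, whose (sub-Poisson) variance proxy is $\tau^2\approx\Delta(\Gamma-1)q$. Using the doubly-regular identity $\Delta=m\Gamma/n$ (so $\Delta/\Gamma=m/n$) the decisive ratio collapses to $g^2/(2\tau^2)=mL$ with $L$ exactly as stated, which is precisely what singles out $L$ as the rate.

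The technical heart is a pair of tight one-sided deviation bounds
\[
\Pr\bc{\Psi_i\le T\mid\sigma_i=1}\le\exp\bc{-\frac{(\mu_1-T)^2}{2\tau^2}},\qquad
\Pr\bc{\Psi_i> T\mid\sigma_i=0}\le\exp\bc{-\frac{(T-\mu_0)^2}{2\tau^2}},
\]
which I would obtain from a Bernstein/sub-Poisson estimate for the background sum. Writing $T=\mu_0+\beta g$ for $\beta\in(0,1)$, the exponents become $(1-\beta)^2 mL$ and $\beta^2 mL$. The main obstacle lives here: these bounds must hold with the correct exponent \emph{uniformly} in the sparsity $p$ and in the channel $S$ (from the Z-channel up to dense noise), and the score is not a sum of independent terms. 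I would therefore condition on the degree sequence and on $\vk$, control the configuration-model dependencies --- in particular that almost surely no pair of agents shares more than one query, so that the background behaves like an independent sum, that the $\pm1$ degree slack is harmless, and that $\vk=np(1+o(1))$ --- and then show that every resulting correction factor is $o_n(1)$ uniformly; these are exactly the losses collected into the $o_n(1)$ term of the theorem.

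Finally I would convert the per-agent bounds into $\eps$-recovery by a first-moment argument. With $\approx\vk$ bit-one and $\approx n-\vk$ bit-zero agents, the expected numbers of false negatives and false positives are $\vk\cdot\Pr[\text{FN}]$ and $(n-\vk)\cdot\Pr[\text{FP}]$; by Markov (budget $\delta/2$ each) it suffices that $\Pr[\text{FN}]\le\eps\delta/2$ and, since there are a factor $\approx1/p$ more bit-zero agents (the ratio $\vk/(n-\vk)$), that $\Pr[\text{FP}]\le\eps\delta p/2$, after which a union bound gives $\hamming(\SIGMA,\tilde\sigma)\le2\eps\vk$ with probability $1-\delta-o_n(1)$. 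Through the tail bounds these two requirements read $(1-\beta)^2 mL\ge a$ and $\beta^2 mL\ge b$ with $a:=\log(2/(\eps\delta))$ and $b:=\log(2/(\eps\delta p))$. Minimising $mL$ over the admissible placements $\beta$ --- the feasibility condition being $\sqrt{a/(mL)}+\sqrt{b/(mL)}\le1$ --- yields $mL\ge(\sqrt a+\sqrt b)^2=a+b+2\sqrt{ab}$, and substituting $b=\log(1/p)+a$ reproduces the claimed bound $m>L^{-1}\bc{\log(1/p)+2\log(2/(\eps\delta))+2\sqrt{\log(2/(\eps\delta))\log(2/(\eps\delta p))}}$. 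The optimiser is $\beta=\tfrac12+\log(1/p)/(2Lm)$, which is exactly the fractional placement of the threshold encoded in $T^\star$, closing the argument.
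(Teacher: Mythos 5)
Your overall architecture coincides with the paper's: the same decomposition of $\PSI_i$ into a self-contribution with gap $\Delta(S(1,1)-S(0,1))$ plus a background with mean proportional to $p_S=S(0,1)+p(S(1,1)-S(0,1))$, the same identification of the rate $g^2/(2\tau^2)=mL$ via $\Delta/\Gamma=m/n$, the same parametrisation $T=\mu_0+\beta g$ with exponents $\beta^2mL$ and $(1-\beta)^2mL$, the same first-moment/Markov conversion with the asymmetric budgets $\log(2/(\eps\delta))$ and $\log(2/(\eps\delta p))$, and the same optimisation $(\sqrt a+\sqrt b)^2$ giving $\beta^\star=\frac12+\log(1/p)/(2Lm)$. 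This is exactly the route of \cref{sec:thresholding}, and your closing algebra reproduces \eqref{bound_m_2} verbatim.

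The one place where your sketch would not survive as written is the concentration step, which is where the paper does its real work. You propose a Bernstein/sub-Poisson bound for the background after arguing that it ``behaves like an independent sum'' because ``almost surely no pair of agents shares more than one query.'' That condition is neither available nor the relevant one here: the design allows $\Gamma$ up to $n^{1-\delta}$ and permits multi-edges, so a fixed agent $x_\ell$ typically occupies several of the $\approx\Gamma\Delta$ background slots of $x_i$ (the expected number of such coincidences is of order $(\Gamma\Delta)^2/n$, which is unbounded in the admissible parameter range), and each repeat duplicates the same $\Be(p)$ bit in $\XI_i$, inflating the variance beyond the independent heuristic. The paper sidesteps this entirely: under the configuration model the background is an \emph{exact} (multivariate) hypergeometric over half-edges (\cref{lem_scoredist,cor_scoredist_2}), which is a sum of negatively associated Bernoullis, so the Chernoff bound of \cref{lem_chernoff_beta} applies with the binomial rate and no independence surrogate is needed. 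Your argument is repairable by exactly this device, but as proposed the justification of the tail bounds has a genuine hole; everything downstream of \cref{lem_chernoff_beta} in your write-up is sound.
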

While our main theorem is an asymptotic result (due to the $o_n(1)$ in the failure probability), the next section analyses the performance of \cref{algo_mn} for a decent number of agents empirically. 

\section{Simulations}\label{sec:simulations}

\begin{figure}[ht]
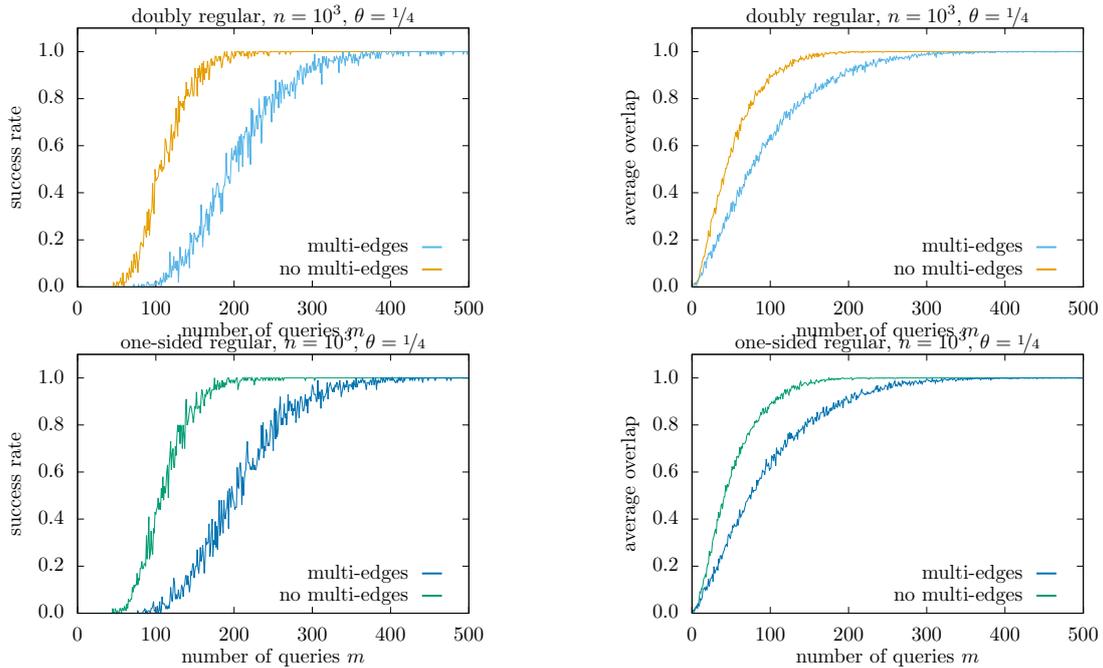
\small
\resizebox{0.45 \linewidth}{!}{
\input{figures/comparison-multi-edges-1000-0.25-success}
}
\hfill
\resizebox{0.45 \linewidth}{!}{
\input{figures/comparison-multi-edges-1000-0.25-overlap}
}
\resizebox{0.45 \linewidth}{!}{
\input{figures/comparison-multi-edges-II-1000-0.25-success}
}
\hfill
\resizebox{0.45 \linewidth}{!}{
\input{figures/comparison-multi-edges-II-1000-0.25-overlap}
}
\caption{Comparison of the algorithm's performance on designs with multi-edges and the corresponding designs without multi-edges. The left plots show the fraction of trials in which at least 90 \% of the agents with hidden bit one were learned correctly. The right plots show the average overlap, i.e., the average number of correctly classified agents with bit one. All simulations are conducted without noise.}
\label{fig:multiedges}
\end{figure}
\begin{figure}[ht]
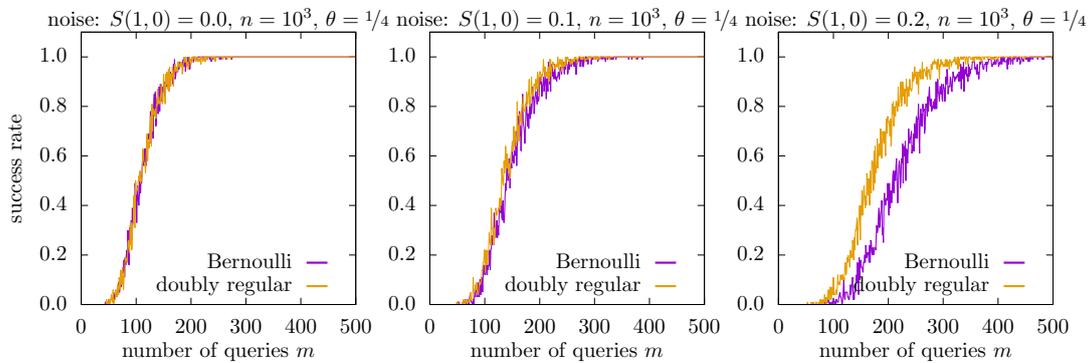
\small
\resizebox{0.32 \linewidth}{!}{
\hfill\input{figures/noise-1000-0.25-0.0-success.tex} }
\hspace{-1em}
\resizebox{0.32 \linewidth}{!}{
\input{figures/noise-1000-0.25-0.1-success.tex} }
\hspace{-1em}
\resizebox{0.32 \linewidth}{!}{
\input{figures/noise-1000-0.25-0.2-success.tex} }
\caption{Comparison of the algorithm's performance on designs with a different degree of regularity. If no noise is present, the Bernoulli model and the doubly regular model perform equally well. Under noise, however, the additional regularity on the agents improves the performance of the algorithm visibly.}
\label{fig:regularity}
\end{figure}

There are essentially three closely related strategies to generate a random graph with expected number of queries per agent  $\Delta$ and expected number of agents per query $\Gamma$:
\begin{itemize}
    \item \emph{Bernoulli}: for any query and any agent, a random coin is flipped whether the agent is part of the query.
    \item \emph{one-sided regular}: every query chooses exactly $\Gamma$ agents uniformly at random.
    \item \emph{doubly regular}: establish a random mapping between agents and queries such that any agent is in $\Delta \pm 1$ queries and every query contains $\Gamma$ agents.
\end{itemize}
Both regular designs can be defined in a way such that an agent appears multiple times in a query (the underlying pooling graph has multi-edges) and in a way such that the underlying pooling graph is a simple graph.
Our formal results are in line with recent literature and allow agents to be queried multiple times.
From a theoretical, asymptotic point of view, it is indeed known that our algorithm performs better when multi-edges are allowed \cite{gebhard_2022_ipdps,hahnklimroth_2022_icdcs}.
However, for realistic sizes of $n$, it is far from clear whether agents appearing multiple times \mbox{in queries help.}

We implemented our simulations in the \CC{} programming language.
As a source of randomness, we use the Mersenne Twister 
\texttt{mt19937\_64} provided by the \CC{11} \texttt{\textless random\textgreater{}} library.
In a simulation run, we first generate the random pooling scheme by constructing a random bipartite graph according to one of the three strategies described above.
Then we perform a faithful simulation of the distributed system where we compute the agents' scores as defined in \cref{algo_mn}.
Note that for the sake of reproducibility and comparability we did not use a different, random number of agents with bit one for each simulation run.
Instead we set $k$ to either $n^{\nicefrac{1}{8}}$, 
$n^{\nicefrac{1}{4}}$, or $n^{\nicefrac{1}{2}}$.
In the following, we present data for $n = 10^3$ agents.
Each data point stems from $100$ independent simulation runs.
We show the success rate, defined as the fraction of simulations in which more than 90\% of agents with hidden bit one were classified correctly, and the average overlap, defined as the overall fraction of correctly classified agents with hidden bit one.
The simulation software and all necessary tools to reproduce our plots will be made available on our public GitHub repository.

The first question of our empirical analysis is whether the appearance of multi-edges is beneficial.
Our simulations reveal that this is not the case.
Adding insult to injury, our empirical data show that multi-edges in the pooling graph are actually harmful, see  \cref{fig:multiedges}.

The second question of interest in the influence of the regularity of the pooling scheme.
In group testing, the more regular the design is, the fewer queries are required \cite{tan_2022,Noonan_2021}.
The next set of results studies this effect in the pooled data problem.
Interestingly, the degree of regularity has only a limited impact on the performance if all queries report perfect information: in the noiseless setting, the algorithm performs equally well on the three model variants.
In contrast, if queries are subject to noise, the same effect as in group testing is visible: doubly regular designs dramatically outperform non-regular models, see \cref{fig:regularity}.

\section{Formal Analysis} \label{sec:analysis}
In this section, we first prove some basic properties of the pooling scheme and pin down the distribution of the score used in the algorithm. 
Afterwards, we analyze how an agent's score depends on the agent's hidden bit.
It turns out that the scores between agents of both types follow a multivariate hypergeometric distribution and the corresponding expected values differ.
We finally optimize a threshold such that due to concentration properties of the hypergeometric distribution, most scores below the threshold belong to agents with bit zero and most scores above the threshold belong to agents with bit one.

\subsection{Properties of the Pooling Scheme} \label{sec:properties-pooling-scheme}
The first lemma shows that although there is a substantial number of multi-edges in the pooling graph, any individual agent appears in almost only distinct queries. 
\begin{lemma}
\label{lem_conc_deltastar}
If $\Gamma = o(n)$, we have with probability $1-o(1)$ that for all $1 \leq i \leq n$,
\begin{align*}
    \Delta_i^\star = (1 + o(1)) \Delta .
\end{align*}
\end{lemma}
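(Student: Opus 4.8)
The plan is to show that for every agent the number of \emph{repeated} incidences --- half-edges of one agent that are matched into the same query --- is negligible compared with $\Delta$, and then to take a union bound over all $n$ agents.

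First I would pin down the distribution of agent $i$'s incidences. Since the configuration model matches the $m\Gamma$ half-edges to the $m\Gamma$ query-slots by a uniformly random bijection, the set of slots occupied by agent $i$'s $\Delta_i$ half-edges is a uniformly random $\Delta_i$-subset of all $m\Gamma$ slots. Writing $M_{i,a}$ for the number of these slots falling into the block of $\Gamma$ slots belonging to query $a$, the vector $(M_{i,a})_{a=1}^m$ is multivariate hypergeometric. As $\Delta_i = \sum_a M_{i,a}$ and $\Delta_i^\star = \#\{a : M_{i,a}\ge 1\}$, the quantity to control is
\[
\Delta_i - \Delta_i^\star = \sum_{a=1}^m (M_{i,a}-1)_+ \le \sum_{a=1}^m \binom{M_{i,a}}{2} =: C_i ,
\]
the number of colliding pairs of half-edges of agent $i$. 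A fixed pair collides with probability $(\Gamma-1)/(m\Gamma-1)$, so $\Erw[C_i] = \binom{\Delta_i}{2}\tfrac{\Gamma-1}{m\Gamma-1}$; using $\Delta_i = (1+o(1))\Delta$ and $\Delta = m\Gamma/n$ this is $(1+o(1))\tfrac{\Delta\Gamma}{2n} = o(\Delta)$, precisely because $\Gamma = o(n)$.

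The heart of the proof is to upgrade this first-moment estimate to an exponential tail bound; a Chebyshev estimate turns out to be too weak to survive the union bound over $n$ agents in the relevant range of $m$. Here I would use that the occupancy counts $(M_{i,a})_a$ of a multivariate hypergeometric are negatively associated. Because $x \mapsto \exp(t\binom{x}{2})$ is non-decreasing for $t>0$, negative association gives $\Erw[\exp(tC_i)] \le \prod_a \Erw[\exp(t\binom{M_{i,a}}{2})]$, so the moment generating function of $C_i$ may be bounded as though the $M_{i,a}$ were independent. Each $M_{i,a}$ has mean $\Delta_i/m = (1+o(1))\Gamma/n = o(1)$, and a standard Chernoff optimisation then yields a Poisson-type bound of the shape $\Pr[C_i \ge \eta\Delta] \le \exp\!\left(-\Omega\!\left(\eta\Delta\log(n/\Gamma)\right)\right)$ for a small constant $\eta>0$. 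I expect this step to be the main obstacle: making the exponent quantitatively large enough is exactly where the design constraints enter, since they force $\Delta$ to grow polynomially in $n$ while $\Gamma = o(n)$ (indeed $\Gamma \le n^{1-\delta}$) keeps $\log(n/\Gamma)$ bounded away from $0$, so that $\eta\Delta\log(n/\Gamma) = \omega(\log n)$.

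Finally I would union bound:
\[
\Pr\!\left[\exists i:\ \Delta_i - \Delta_i^\star \ge \eta\Delta\right] \le \sum_{i=1}^n \Pr[C_i \ge \eta\Delta] \le n\exp\!\left(-\Omega(\eta\Delta\log(n/\Gamma))\right) = o(1).
\]
On the complementary event $\Delta_i^\star \ge \Delta_i - \eta\Delta$ for all $i$ simultaneously; combined with $\Delta_i^\star \le \Delta_i = (1+o(1))\Delta$ and the freedom to send $\eta \to 0$ slowly (the exponent stays $\omega(\log n)$ since $\Delta$ is polynomially large), this gives $\Delta_i^\star = (1+o(1))\Delta$ uniformly in $i$, as claimed.
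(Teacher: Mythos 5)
Your reduction $\Delta_i-\Delta_i^\star=\sum_a(M_{i,a}-1)_+$ and the first-moment computation are correct, and up to that point your argument is essentially equivalent to the paper's (which computes $\Pr(x_i\in a_j)=(1+o(1))\Gamma/n$ and hence $\Erw[\Delta_i^\star]=(1+o(1))m\Gamma/n=(1+o(1))\Delta$). The gap is the tail bound for $C_i=\sum_a\binom{M_{i,a}}{2}$. The estimate $\Pr(C_i\ge\eta\Delta)\le\exp(-\Omega(\eta\Delta\log(n/\Gamma)))$ is not just unproven but false in the relevant parameter range: if a single query receives $r=\lceil\sqrt{2\eta\Delta}\rceil$ of agent $i$'s half-edges (possible whenever $r\le\min(\Delta_i,\Gamma)$, which the design constraints permit), then already $C_i\ge\eta\Delta$, and this event has probability of order $(\eul\Delta_i/(rm))^{r}=\exp(-O(\sqrt{\Delta}\,\log(n\Delta/\Gamma)))$, exponentially larger than the claimed bound once $\Delta\to\infty$. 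The failure is visible inside your MGF step: $\Erw[\exp(t\binom{M_{i,a}}{2})]$ contains terms $\Pr(M_{i,a}=j)\,\eul^{tj(j-1)/2}$ whose quadratic exponent overwhelms the factorial decay of the hypergeometric tail, so each factor stays close to $1$ only for $t\lesssim\log(n/\Gamma)/\min(\Delta_i,\Gamma)$; the resulting Chernoff exponent $t\eta\Delta$ can then be as small as $O(\log n)$ (e.g.\ when $\Delta_i\le\Gamma$), and the union bound over $n$ agents breaks. The root cause is that passing from $(M_{i,a}-1)_+$ to $\binom{M_{i,a}}{2}$ replaces a quantity bounded by $M_{i,a}$ with an unbounded quadratic one, and the pair-collision indicators comprising $C_i$ are not negatively associated (one heavy query creates quadratically many collisions simultaneously), so no Poisson-type bound applies to $C_i$.

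The repair is to avoid $\binom{M_{i,a}}{2}$ altogether: write $\Delta_i^\star=\sum_a\vecone\{M_{i,a}\ge1\}$, a sum of bounded, negatively associated indicators with mean $mp_c=(1+o(1))\Delta$, and apply the Chernoff bound for such sums to get $\Pr(|\Delta_i^\star-mp_c|>\eta\Delta)\le2\exp(-\Omega(\eta^2\Delta))$, which is $o(1/n)$ because the design constraints force $\Delta$ to be polynomially large; your union-bound step then goes through verbatim. This is exactly the paper's route. Alternatively, revealing agent $i$'s half-edges sequentially shows that $\Delta_i-\Delta_i^\star$ is stochastically dominated by $\Bin(\Delta_i,2\Gamma/n)$, whose upper tail at $\eta\Delta$ is $\exp(-\Omega(\eta\Delta\log(n/\Gamma)))$ --- the bound you wanted, but for the right random variable.
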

\begin{proof}
The probability that a specific agent $x_i$ is part of a specific query $a_j$ is given by
\begin{align*}
    p_c &= 1 - \frac{ \binom{\sum_j \Delta_j - \Delta_i}{\Gamma} }{ \binom{\sum_{j} \Delta_j}{\Gamma} } = (1 + o(1)) \bc{ 1 - \frac{ \binom{n \Delta - \Delta}{\Gamma} }{ \binom{n \Delta}{\Gamma} } }.
\intertext{A short calculation that uses the equality $n \Delta = m \Gamma$ verifies}
    p_c &= (1 + o(1)) \bc{ 1 - \prod_{j=1}^\Delta \bc{ 1 - \frac{\Gamma - j}{ n\Delta - j } }} \\  
    &= (1 + o(1)) \bc{1 - \bc{1 - \frac{1}{m}}^{\frac{m \Gamma}{n}} }\\ & = (1 + o(1)) \bc{ 1 - \exp \bc{ - \frac{\Gamma}{n} } }.
\end{align*}
As the events of being part of a specific query are negatively associated under the configuration model, the Chernoff bound establishes that $\Delta_i^\star = (1 + o(1)) m p_c$ \cite{chen2013concentration}. 
On the other hand, $\Delta = \frac{m \Gamma}{n}$ by construction. 
If $\Gamma = o(n)$ as supposed, we have by a Taylor expansion that $p_c = (1 + o(1)) \frac{\Gamma}{n}$, thus $\Delta_i^\star = (1 + o(1)) \frac{m \Gamma}{n}$ and the lemma follows. 
\end{proof}

Next, we define the agents' \emph{scores} $\PSI_i$. In the end, the score will be used to decide whether an agent's hidden bit is zero or one.
For an agent $x_i$, given the pooling graph $\G$ and the sequence of queries' results $\hat \SIGMA$, let  $$\PSI_i = \sum_{j = 1}^m \vecone \cbc{ x_i \in a_j } \hat \SIGMA_j$$ be the score of $x_i.$ 
Clearly, this score is expected to be larger by the additive number $\Delta_i^\star$ if $\SIGMA_i = 1$. 
In previous works, the pooling design had no restrictions on the agent's degree which induced a lot of stochastic independence on $\PSI$. 
Moreover, the designs were not built on the configuration model and it was supposed that the exact number of agents with bit one was known a priori. 
In the current contribution, we need to be much more careful. 
Throughout this section, we implicitly condition on the $\sigma-$algebra generated by the edges of the underlying random graph.

The next lemma describes the distribution of the agents' scores under perfect queries. 
While in the actual model we assume to queries to be noisy, this case will help to understand the (much more involved) general distribution.
\begin{lemma}[Score distribution without noise] \label{lem_scoredist}
If the channel matrix $S$ is the identity matrix (no noise), we find that
$$ \PSI_i \stackrel{d}{=} \vec X_i + \Delta_i \vecone \cbc{ \SIGMA_i = 1 }$$
where $\vec X_i$ is distributed as follows.
Given $\SIGMA$, it is hypergeometrically distributed with population size $ \sum_{ j \neq i}\Delta_j$, $\sum_{ j \neq i} \vecone{ \cbc{ \SIGMA_j = 1} } \Delta_j$ successes and $\Gamma \Delta_i^\star - \Delta_i$ draws.
\end{lemma}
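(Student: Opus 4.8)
The plan is to realise the score directly in terms of the half-edges (clones) of the configuration model and to split its contribution into the clones of $x_i$ itself and those of all other agents. Recall that under the configuration model agent $j$ is equipped with $\Delta_j$ clones, each query carries $\Gamma$ slots, and $\G$ arises from a uniformly random perfect matching of the $\sum_j\Delta_j=m\Gamma$ clones to the $m\Gamma$ slots. Since a query's noiseless result is $\hat\SIGMA_j=\sum_{c\in a_j}\SIGMA_{\mathrm{ag}(c)}$, the sum over the $\Gamma$ clones $c$ occupying its slots (with $\mathrm{ag}(c)$ the agent owning $c$), and since the score adds $\hat\SIGMA_j$ once per distinct query containing $x_i$, I would first write
\[
\PSI_i=\sum_{j:\,x_i\in a_j}\hat\SIGMA_j=\sum_{j:\,x_i\in a_j}\ \sum_{c\in a_j}\SIGMA_{\mathrm{ag}(c)},
\]
where the outer sum ranges over the $\Delta_i^\star$ distinct queries meeting $x_i$.

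Next I would isolate the contribution of $x_i$'s own clones. All $\Delta_i$ clones of agent $i$ land in queries containing $x_i$, hence among the $\Delta_i^\star$ queries of the outer sum, and each such clone contributes exactly $\SIGMA_i$ to the result of its query; because the score counts each distinct query once, the total contribution of $x_i$'s clones is precisely $\Delta_i\SIGMA_i$ (even when several clones of $x_i$ fall into the same query). Writing $\vec X_i$ for the remaining part, namely the sum of $\SIGMA_{\mathrm{ag}(c)}$ over the clones $c$ of agents $\ne i$ that occupy the other slots of these $\Delta_i^\star$ queries, this yields the claimed decomposition $\PSI_i=\vec X_i+\Delta_i\vecone\cbc{\SIGMA_i=1}$. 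The number of such free slots is $\Gamma\Delta_i^\star-\Delta_i$, since the $\Delta_i^\star$ queries offer $\Gamma\Delta_i^\star$ slots in total, of which $\Delta_i$ are occupied by $x_i$.

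It remains to identify the law of $\vec X_i$, which is where the main work lies. The idea is an exchangeability argument: conditioned on $\SIGMA$ and on the positions of $x_i$'s clones (in particular on $\Delta_i$ and $\Delta_i^\star$), the restriction of the uniform matching to the remaining clones and slots is again uniform, so the set of agent-$\ne i$ clones filling the $\Gamma\Delta_i^\star-\Delta_i$ free slots is a uniformly random subset of the $\sum_{j\ne i}\Delta_j$ clones owned by the other agents, drawn without replacement. Of these clones exactly $\sum_{j\ne i}\vecone\cbc{\SIGMA_j=1}\Delta_j$ carry bit one, so $\vec X_i$ counts the successes in such a draw and is therefore hypergeometric with the stated parameters. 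The one point that needs care, and the main obstacle, is that conditioning on the random quantity $\Delta_i^\star$ must not bias the bits of the filler clones; this is legitimate because the coin-flips producing $\SIGMA$ are independent of the matching, and because $\Delta_i^\star$ is measurable with respect to the part of the matching involving only $x_i$'s clones, leaving the complementary matching uniform. Consequently the conditional law of $\vec X_i$ depends on the geometry only through the number of draws $\Gamma\Delta_i^\star-\Delta_i$, which gives the lemma.
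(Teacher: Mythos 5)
Your proof is correct and follows essentially the same route as the paper's: decompose $\PSI_i$ into the $\Delta_i\SIGMA_i$ contribution of $x_i$'s own half-edges and the $\Gamma\Delta_i^\star-\Delta_i$ remaining slots, which under the configuration model are filled by a uniform without-replacement draw from the $\sum_{j\neq i}\Delta_j$ half-edges of the other agents, yielding the hypergeometric law. Your added care about conditioning on $\Delta_i^\star$ and about multi-edges incident to $x_i$ only makes explicit what the paper leaves implicit.
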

\begin{proof}
The score $\PSI_i$ consists of two parts. 
First, the contribution of $x_i$. 
Second, the contribution of all agents in the second neighborhood of $x_i$.
Observe that, due to the appearance of multi-edges, $x_i$ can be part of this second neighborhood itself.
We need to calculate the distribution of the number of agents with bit one connected to $\partial_{\G}(x_i)$ in the subgraph $\G - x_i$. 
By definition of the configuration model, we find that in this graph, the queries need to choose exactly  $\Gamma \Delta_i^\star - \Delta_i$ distinct half-edges. 
The score is increased by one if and only if a half-edge is connected to an agent with bit one under the ground truth. 
There are exactly $\sum_{ j \neq i} \vecone{ \cbc{ \SIGMA_j = 1} } \Delta_j$ half-edges with this property while in total, there remain $\sum_{ j \neq i}\Delta_j$ half-edges in $\G - x_i$. 
Finally, if $\SIGMA_i = 1$, thus $x_i$ has bit one, then the score $\PSI_i$ is increased by $\Delta_i$.
\end{proof}

For the sake of intuition, we remark that under the knowledge of $\vk$, 
\begin{align*}
    \sum_{ j \neq i}\Delta_j &\approx (n-1) \Delta\\ \quad \sum_{ j \neq i} \vecone{ \cbc{ \SIGMA_j = 1} } \Delta_j &\approx (\vec k - \vecone{\cbc{\SIGMA_i = 1}}) \Delta
\end{align*}
and so $\vX_i$ has approximately as good concentration properties as a $$\Bin\bc{\Delta^\star \Gamma-\Delta,\frac{ \vk-\vecone\{ \SIGMA(j) = 1\} }{n-1}}$$ random variable.

The next lemma generalizes the distribution of the agents' scores to the more general model in which queries are exposed to noise.
Observe that for $S$ being the identity matrix, this boils down to the previous lemma.

\begin{lemma}[Score distribution under noise] \label{cor_scoredist_2}
Given the channel matrix $S$ and $\vk$, let $ N_i = \sum_{ j \neq i}\Delta_j$, $K_i = \sum_{ j \neq i} \vecone{ \cbc{ \SIGMA_j = 1} } \Delta_j$ and $n_i = \Gamma \Delta_i^\star - \Delta_i$. 
Let $\vY_i$ be a multivariate hypergeometrically distributed random variable on four types in a population of size $N_i$ with $n_i$ draws. 
We let $K_i(1,1) = K_i S(1,1)$, $K_i(0,1) = (N_i - K_i) S(0,1)$, $K_i(1,0) = K_i S(1,0)$ and $K_i(0,0) = (N_i - K_i) S(0,0)$ be the number of half-edges in the population of the four types.
We have
\begin{equation*}
\begin{split}
   \PSI_i  \stackrel{d}{=} \vY_i(1,1) + \vY_i(0,1)    &+  \Bin \bc{\Delta_i, S(1,1)} \vecone \cbc{ \SIGMA_i = 1 }\\ &  + \Bin \bc{\Delta_i, S(0,1)} \vecone \cbc{ \SIGMA_i = 0 } .
\end{split}
\end{equation*}
Furthermore, we have that $\PSI_i$ is a sum of negatively associated Bernoulli random variables.
\end{lemma}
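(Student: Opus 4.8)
The plan is to generate the score by revealing the randomness in a convenient order and then to read off both claims. First I would condition on the incidence structure of $x_i$, i.e.\ on which $\Delta_i^\star$ distinct queries contain $x_i$ and with what multiplicities; as in the proof of \cref{lem_scoredist} this fixes the block of $\Gamma \Delta_i^\star$ half-edge slots that feed into $\PSI_i$, of which $\Delta_i$ are occupied by $x_i$'s own half-edges and $n_i = \Gamma\Delta_i^\star - \Delta_i$ by the half-edges of other agents. Since $\PSI_i$ simply counts the slots whose noisy read equals $1$, it splits into a self-part (the reads of $x_i$'s own bit) and a neighbour-part (the reads of the other agents). The self-part is immediate: each of the $\Delta_i$ incidences of $x_i$ is read independently, reporting a one with probability $S(1,1)$ when $\SIGMA_i = 1$ and with probability $S(0,1)$ when $\SIGMA_i = 0$, which gives exactly the two Binomial summands in the statement, independent of everything else.

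For the neighbour-part I would use a colour-then-sample argument exploiting that the channel noise is independent of the configuration-model matching. Revealing the noise first attaches a reported bit to each of the $N_i$ non-$x_i$ half-edges and thereby partitions them into the four types $(a,b)$ (true bit $a$, reported bit $b$), with type counts whose conditional means are the quantities $K_i(1,1), K_i(1,0), K_i(0,1), K_i(0,0)$ named in the lemma. Conditioned on this colouring, \cref{lem_scoredist} shows that the half-edges occupying the $n_i$ neighbour-slots form a uniform sample without replacement from the $N_i$ half-edges, so the vector recording how many of each type are drawn is precisely the multivariate hypergeometric vector $\vY_i$. The neighbour-part is then the number of drawn half-edges reported as one, namely $\vY_i(1,1) + \vY_i(0,1)$, and adding the self-part yields the claimed identity in distribution.

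To obtain the negative-association claim I would argue at the level of half-edges. Assign to each half-edge $e$ an independent read bit $R_e$, and let $I_e$ indicate that $e$ is matched into one of $x_i$'s slots, so that $\PSI_i = \sum_e R_e I_e$ is a sum of Bernoulli variables. Conditioning on $x_i$'s incidences, the other half-edges routed into the $n_i$ free neighbour-slots form a uniformly random subset of fixed size, and the occupancy indicators of sampling without replacement are negatively associated; the reads $\{R_e\}$ are independent of each other and of the matching, so the combined family $\{I_e\} \cup \{R_e\}$ is negatively associated. Each summand $R_e I_e$ is a coordinatewise nondecreasing function of the disjoint pair $(R_e, I_e)$, whence the closure of negative association under nondecreasing functions of disjoint blocks shows that $\{R_e I_e\}_e$ is negatively associated. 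The self-part corresponds to the half-edges with $I_e \equiv 1$, which are plainly independent and cause no trouble.

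The step I expect to be the main obstacle is the negative-association argument: one must carefully justify that the configuration-model occupancy indicators are negatively associated while accounting for multi-edges and for the fact that the slot block itself depends on the (random) incidences of $x_i$, which is why I would fix those incidences by conditioning before invoking the sampling-without-replacement property. By contrast the distributional identity is essentially bookkeeping layered on top of \cref{lem_scoredist} once the noise is revealed first.
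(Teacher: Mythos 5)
Your proposal is correct and follows essentially the same route as the paper: the distributional identity rests on the observation that applying the independent channel noise before or after the configuration-model sampling yields the same law (the paper phrases this as the equivalence of two colour/sample experiments, you simply reveal the noise first), and the self-contribution gives the two Binomial terms. Your negative-association argument via the indicators $I_e$ and reads $R_e$ is more explicit than the paper's one-line appeal to the NA property of sampling without replacement, but it establishes the same fact by the same underlying mechanism.
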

\begin{proof}
As a first step, we prove that
\begin{align}
   \label{cor_scoredist}\PSI_i  \stackrel{d}{=} & \Bin \bc{ \vec X_i, S(1,1) } + \Bin \bc{ N_i - \vec X_i, S(0,1) }   \\
   & \quad + \Bin \bc{\Delta_i, S(1,1)} \vecone \cbc{ \SIGMA_i = 1 }  + \Bin \bc{\Delta_i, S(0,1)} \vecone \cbc{ \SIGMA_i = 0 }. \notag
\end{align}
For any of the half-edges connected to agents in $\G - x_i$, any of the $\vX_i$ one-bits is transmitted correctly independently from everything else with probability $S(1,1)$, so given $\vX_i$, the number of such bits is binomially distributed. 
Analogously, any of the $N_i - \vX_i$ zero-bits is transmitted falsely with probability $S(0,1)$. 
Finally, for any of the $\Delta_i$ edges connected to $x_i$, the same channel argument is applied as the transmission is independent from the rest. This proves \eqref{cor_scoredist}.

The lemma now follows from \eqref{cor_scoredist} and the following combinatorial insight. 
Given $N$ balls out of which $K$ are red and $N-K$ are blue, some red balls will be colored lime and some blue balls will be colored orange. 
In the end, we want to describe the distribution of the sum of the red and orange balls in a sample. 
The two following random experiments yield the same distribution of the sum of orange and red balls.
\begin{itemize}
    \item  First random experiment
    \begin{itemize}
        \item Draw $n$ balls without replacement out of the $N$ balls, let $S$ be the sample.
        \item For any red ball in $S$, color it independently from anything else with probability $p$ in lime.
        \item For any blue ball in $S$, color it independently from anything else with probability $q$ in orange.
    \end{itemize}
    \item  Second random experiment
    \begin{itemize}
        \item For any red ball out of the $N$ balls, color it independently from anything else with probability $p$ in lime.
        \item For any blue ball out of the $N$ balls, color it independently from anything else with probability $q$ in orange.
        \item Draw $n$ balls without replacement out of the $N$ balls.
    \end{itemize}
\end{itemize}
The distribution of \cref{cor_scoredist} is given by the first experiment while the distribution described in \cref{cor_scoredist_2} is given by the second experiment. 
This concludes the distributional equality.

The second statement, namely the negative association, is a well-known property of the (multivariate) hypergeometric distribution \cite{Joag_1983}.
\end{proof}

\subsection{Differences Based on the Hidden Bit}

Finally, we need to make sure that at most $2 \eps \vk$ agents are classified incorrectly. 
We will order the agents by their centralized scores $ \PSI_i - \Erw \brk{ \vX_i }$ and declare any agent with a score above a certain threshold $T$ as having bit one and to have bit zero otherwise. 
We only need to conduct enough queries such that the concentration properties of $\PSI_i - \Erw \brk{ \vX_i }$ are good enough such that only $2 \eps \vk$ errors occur with probability $1 - \delta$. 

It is easily verified that
\begin{equation}
\begin{aligned}
    & \Erw \brk{ \PSI_i \mid \SIGMA } \\
    &= n_i \frac{K_i}{N_i}S(1,1) + n_i \frac{N_i - K_i}{N_i}S(0,1) \label{eq_expectation_psi}\\
    & \phantom{{}={}} + \Delta_i S(1,1) \vecone \cbc{\SIGMA_i = 1} + \Delta_i S(0,1) \vecone \cbc{\SIGMA_i = 0} \\
    & = \bc{\Gamma \Delta_i^\star - \Delta_i} \\ &\phantom{{}={}}  \cdot \bc{\frac{ \vk }{n} \bc{S(1,1) - S(0,1)} + S(0,1) + O \bc{ n^{-1} } } \\
    & \phantom{{}={}} + \Delta_i S(1,1) \vecone \cbc{\SIGMA_i = 1} + \Delta_i S(0,1) \vecone \cbc{\SIGMA_i = 0} \end{aligned}
\end{equation}
and thus { 
\begin{align*}
    \Erw \brk{ \PSI_i \mid \SIGMA_i \!=\! 1 } - \Erw \brk{ \PSI_i \mid \SIGMA_i \!=\! 0 } &= \Delta_i \bc{ S(1,1) - S(0,1) }.
\end{align*}}
Observe that this difference depends only on the hidden bit of agent $x_i$ and its degree in the underlying graph. It does explicitly not depend on the prior $p$ nor the number of positive hidden bits $\vec k$.

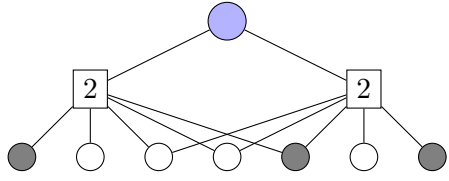
\begin{figure}[t]
\centering
\begin{tikzpicture}[scale=0.9]
\node[circle, draw, minimum width=0.5cm, fill=blue!30] (x0) at (4, 2) {};
\node[circle, draw, minimum width=0.3cm, fill=gray] (x1) at (1,0) {};
\node[circle, draw, minimum width=0.3cm, fill=white] (x2) at (2, 0) {};
\node[circle, draw, minimum width=0.3cm, fill=white] (x3) at (3, 0) {};
\node[circle, draw, minimum width=0.3cm, fill=white] (x4) at (4, 0) {}; 
\node[circle, draw, minimum width=0.3cm, fill=gray] (x5) at (5, 0) {};
\node[circle, draw, minimum width=0.3cm, fill=white] (x6) at (6, 0) {};
\node[circle, draw, minimum width=0.3cm, fill=gray] (x7) at (7, 0) {};

\node[rectangle, draw, minimum width=0.4cm, minimum height=0.4cm] (a1) at (2, 1) {$2$};
\node[rectangle, draw, minimum width=0.4cm, minimum height=0.4cm] (a2) at (6, 1) {$2$};

\path[draw] (a1) -- (x0);
\path[draw] (a2) -- (x0);

\path[draw] (a1) -- (x1);
\path[draw] (a1) -- (x2);
\path[draw] (a1) -- (x3);
\path[draw] (a1) -- (x4);
\path[draw] (a1) -- (x5);

\path[draw] (a2) -- (x3);
\path[draw] (a2) -- (x4);
\path[draw] (a2) -- (x5);
\path[draw] (a2) -- (x6);
\path[draw] (a2) -- (x7);
\end{tikzpicture}
\label{fig_second_neighborhood}
\caption{Example of a second neighborhood of the blue agent in the pooling graph. The distribution of the number of agents with bit one is independent from the agent's bit. }
\end{figure}

We, therefore, need to establish conditions, such that for all but $\eps \vk~$ agents $x_i$ with bit one and all but $\eps \vk~$agents $x_j$ with bit zero the contribution \[ \abs{ \bc{\vY_i(1,1) + \vY_i(0,1)} - \bc{  \vY_j(1,1) + \vY_j(0,1) } } \]  is, with probability at least $1- \delta$, smaller than $\Delta \bc{ S(1,1) - S(0,1) } + O(1)$ (see \cref{cor_scoredist_2}). 
In the last step, we used that $\abs{\Delta_i - \Delta} \leq 1$ by definition of the pooling design.

We will give an alternative sufficient condition, namely that for all agents $x_i$, we find that $\bc{\vY_i(1,1) + \vY_i(0,1)}$ is sufficiently well concentrated around its mean.
If we denote \[p_S = p \bc{ S(1,1) - S(0,1) } + S(0,1),\] then as in \eqref{eq_expectation_psi},
\begin{align*}
    \Erw \, \brk{ \vY_i(1,1) + \vY_i(0,1) } = \Gamma \Delta^\star \bc{ p_S + O \bc{ n^{-1} } }.
\end{align*}

\begin{lemma} \label{lem_chernoff_beta}
Let $\XI_i = \vY_i(1,1) + \vY_i(0,1)$. 
For all $\beta > 0$, we find that
\begin{align*}
    &\Pr \bc{ \abs{\XI_i - \Erw \brk{ \XI_i } } > \beta \Delta   } \leq  2 \exp \bc{ - \frac{ (1 + o(1)) \beta^2 m}{ 2 n p_S  } }
\end{align*}
\end{lemma}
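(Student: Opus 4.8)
The random variable $\XI_i = \vY_i(1,1) + \vY_i(0,1)$ counts, among the $n_i = \Gamma\Delta_i^\star - \Delta_i$ half-edges drawn from the second neighbourhood of $x_i$, those that are reported as one-bits. By \cref{cor_scoredist_2} it is a sum of negatively associated Bernoulli indicators, and by the computation preceding the lemma its mean is $\Erw\brk{\XI_i} = \Gamma\Delta^\star\bc{p_S + O(n^{-1})}$. The plan is to exploit the negative association to invoke the Chernoff--Bernstein tail bound valid for sums of \emph{independent} indicators, and then to simplify the resulting exponent using the identities $\Delta = m\Gamma/n$ and $\Delta^\star = (1+o(1))\Delta$ from \cref{lem_conc_deltastar}.

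First I would record the concentration tool. Since $\XI_i$ is a sum of negatively associated Bernoulli variables, its moment generating function is dominated by that of the corresponding independent sum (applying the defining inequality to the indicators for the upper tail, and to their negations for the lower tail), so the classical Bernstein inequality applies verbatim \cite{Joag_1983,chen2013concentration}. Writing $\mu_i = \Erw\brk{\XI_i}$ and using that the variance of a negatively associated indicator sum is at most $\mu_i$ (the hypergeometric finite-population correction only \emph{decreases} the variance, so bounding by the binomial sum is legitimate), this yields, for every $t>0$,
\begin{align*}
    \Pr\bc{\abs{\XI_i - \mu_i} > t} \le 2\exp\bc{-\frac{t^2}{2\bc{\mu_i + t/3}}}.
\end{align*}

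Next I would substitute $t = \beta\Delta$ and argue that the Bernstein denominator is $(1+o(1))\mu_i$. In the relevant parameter range one has $\Gamma p_S \to \infty$, equivalently $\mu_i = \Gamma\Delta^\star p_S \gg \Delta$, so that $t/\mu_i = \beta\Delta/(\Gamma\Delta^\star p_S) = \beta/(\Gamma p_S)\cdot(1+o(1)) \to 0$ for any fixed $\beta>0$; hence $\mu_i + t/3 = (1+o(1))\mu_i$ and the deviation $\beta\Delta$ lies in the sub-Gaussian regime. Plugging $\Erw\brk{\XI_i} = \Gamma\Delta^\star(p_S+O(n^{-1}))$, $\Delta^\star=(1+o(1))\Delta$, and $\Delta = m\Gamma/n$ into $t^2/(2\mu_i)$ gives
\begin{align*}
    \frac{(\beta\Delta)^2}{2\Gamma\Delta^\star p_S} = \frac{\beta^2\Delta}{2\Gamma p_S}\bc{1+o(1)} = \frac{\beta^2 m}{2np_S}\bc{1+o(1)},
\end{align*}
which is exactly the claimed exponent, and the symmetric treatment of the two tails produces the factor $2$.

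The main obstacle is bookkeeping rather than conceptual. One must verify that the deviation $\beta\Delta$ genuinely falls in the sub-Gaussian regime, i.e.\ that $\Gamma p_S\to\infty$ follows from the design constraint $\Gamma \ge n^{\delta}\sqrt{n(mp)^{-1}}$, and then check that the several error terms — the $(1+o(1))$ from $\Delta_i^\star\approx\Delta$, the $O(n^{-1})$ correction in $\mu_i$, and the $t/3$ in the Bernstein denominator — are uniform over all agents $i$ and collapse into a single $(1+o(1))$ factor in the exponent. The key structural input that makes this go through is the negative association of the multivariate hypergeometric law, which lets us treat $\XI_i$ as though its indicators were independent while the finite-population correction only improves the bound.
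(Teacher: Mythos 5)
Your proposal is correct and follows essentially the same route as the paper: a Bernstein/Chernoff tail bound for the hypergeometric law (the paper cites Janson's bound directly, while you rederive it via negative association and domination by the independent binomial sum), followed by the identical algebraic simplification using $\Erw\brk{\XI_i}=(1+o(1))\Gamma\Delta p_S$, $\Delta = m\Gamma/n$, and $\beta/(\Gamma p_S)=o(1)$ to obtain the exponent $\beta^2 m/(2np_S)$. The only difference is that you make explicit the verification that the deviation lies in the sub-Gaussian regime, which the paper asserts without detail.
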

\begin{proof}
By a standard application of the Chernoff bound for the hypergeometric distribution \cite{Janson99onconcentration} and \eqref{eq_expectation_psi} we have
\begin{align*}
    \Pr &\bc{ \abs{ \XI_i - \Erw \brk{ \XI_i \mid \vk } } > \beta \Delta  \mid \vk }\\ 
    &\leq 2 \exp \bc{ - (1 + o(1) \frac{\beta^2 \Delta^2}{ (2 + \beta / (\Gamma p_S)) \Erw \brk{ \XI_i \mid \vk } } } \\
\end{align*}
Because $\Erw \brk{ \XI_i \mid \vk } = \Delta^\star \Gamma p_S$, and $\beta / (\Gamma p_S) = o(1)$ for any fixed $\beta \in (0,1)$, by \cref{lem_conc_deltastar}, the fact that $\Gamma = \frac{n \Delta}{m}$ and $\vk = p n \pm O \bc{ \sqrt{ np \log n }   }$ with exponentially high probability by the Chernoff bound for the binomial distribution, the lemma follows.
\end{proof}

\subsection{Thresholding the Score} \label{sec:thresholding}
\paragraph{Outline}
Next, we construct a threshold $T = T(p, m, n)$ with the goal that $\PSI_j - \Erw \brk{ \XI_j } > T$ for all but $\eps \vk$ agents with hidden bit one and  $\PSI_i - \Erw \brk{ \XI_i } < T$ for all but $\eps \vk$ agents with hidden bit one. 
Observe that for $\alpha \in (0,1)$, 
\begin{align*}
    \MoveEqLeft \Delta_j S(0,1) + \alpha \Delta_j ( S(1,1) - S(0,1) ) \\
    & =  \Delta_j S(1,1) - (1 - \alpha) \Delta_j (S(1,1) - S(0,1)).
\end{align*}

Therefore, we optimize the threshold $T$ with respect to $\alpha$ which can be seen as an interpolation between the expected difference in the neighborhood sum for agents of different states.
This gives us the bound
\begin{equation}
\label{bound_m_2}
    m\! >\! \frac{1}{L}\left(\!\ln\left(\frac{1}{p}\right) + 2\ln\left(\frac{2}{\eps\delta}\right) + 2\sqrt{\ln\left(\frac{2}{\eps\delta}\right)\ln\left(\frac{2}{\eps \delta p}\right)}\right) .
\end{equation}

The main theorem follows from \eqref{bound_m_2} and the fact that conditioning on $\cK$ does not harm the result. 
Indeed, by \eqref{bound_m_2}, the algorithm is with probability $1-\delta$ an $\eps$-recovery algorithm. 
The analysis is valid, conditioned on $\cK$. 
As discussed earlier, $\Pr \bc{ \cK } = 1 - n^{- \Omega(1)}$ by Chernoff's bound. 
Consequently, the algorithm only fails to be an $\eps$-recovery algorithm with probability $1- \delta$ on the event $\neg \cK$ which has probability $o_n(1)$.
Therefore, the algorithm is, unconditional, an $\eps$-recovery algorithm with probability at least $(1 - \delta) ( 1 - \Pr \bc{ \cK } )$ which implies \cref{thm_mainresult}. \qed

\paragraph{Formal derivation}

It remains to formally prove \cref{bound_m_2}.
To this end, define 
\[ T_\alpha = \Delta_j S(1,1) - (1 - \alpha) \Delta_j (S(1,1) - S(0,1)) \]
and observe that by \cref{lem_conc_deltastar,lem_chernoff_beta}, we have
\begin{align}
    &\label{eq_healthy} \Pr \bc{ \PSI_j > T_\alpha \mid \SIGMA_j = 0  } \leq 2 \exp \bc{ (- 1 + o(1)) \frac{\alpha^2 (S(1,1) - S(0,1))^2 m }{2 n p_S}  } 
\end{align}

for agents with bit zero and analogously
\begin{align}
    &\label{eq_infected} \Pr \bc{ \PSI_j < T_\alpha \mid \SIGMA_j = 1 }  \leq  2 \exp \bc{  - (1 + o(1)) \frac{(1 - \alpha)^2 (S(1,1) - S(0,1))^2}{2 n p_S} }. 
\end{align}

Let $\tilde \sigma^{(\alpha)}$ be the estimate of $\SIGMA$ by  \cref{algo_mn} using threshold $T_\alpha$.
For brevity, define
\begin{align*}
 L &= L(n, p, S) =\frac{\bc{ S(1,1) - S(0,1) }^2}{ 2 n p_S }, \quad
    N_\alpha = \alpha^2 L m, \quad \text{and}  \quad    P_\alpha = (1-\alpha)^2 L m.
\end{align*}
Define $\cK$ as the event that $\vk \in \bc{ np - \sqrt{ np  } \log n, np + \sqrt{ np  } \log n  }$. 
This means that conditioned on $\cK$ there are roughly as many agents with bit one as we expect. 
Clearly, Chernoff's bound directly implies $\Pr \bc{ \cK } = 1 - n^{- \Omega(1)}$.

We let $\fp(\alpha) = \cbc{x_i : \SIGMA_i = 0, \tilde \sigma^{(\alpha)}_i = 1 }$ be the set of false positives and $\fn(\alpha) = \cbc{x_i : \SIGMA_i = 1, \tilde \sigma^{(\alpha)}_i = 0 } $ the set of false negatives. 
By \eqref{eq_healthy} and \eqref{eq_infected} we find 
\begin{align}
    \label{expectation_error} &\Erw \brk{ \abs{ \fp(\alpha) } \mid \cK } \leq 2 n (1 - p) \exp \bc{  - (1 + o(1)) N_\alpha  } \intertext{ and }
    &\Erw \brk{ \abs{ \fn(\alpha) } \mid \cK } \leq 2 n p \exp \bc{  - (1 + o(1)) P_\alpha }.
\end{align}

Therefore, Markov's inequality yields
\begin{align}
    \label{markov_error} & \Pr \brk{ \abs{ \fp(\alpha) } > \eps n p  \mid \cK } \leq \frac{2 (1 -p) \exp \bc{  - (1 + o(1)) N_\alpha  }}{\eps p}, \\ \notag 
    & \Pr \brk{ \abs{ \fn(\alpha) } > \eps n p \mid \cK } \leq \frac{2 \exp \bc{  - (1 + o(1)) P_\alpha  }}{\eps }.
\end{align}
Thus, it is sufficient to determine the value of $\alpha$ which minimizes the number of queries conducted such that
\begin{align*}
    \frac{2 \exp \bc{-N_\alpha} }{\eps p} < \delta \quad \text{and} \quad \frac{2 \exp \bc{-P_\alpha} }{\eps} < \delta
\end{align*}
or, equivalently,
\begin{align} \label{eq_sufficient_np}
N_\alpha > \log \bc{ \frac{2 }{\eps \delta p} } \quad \text{and} \quad P_\alpha > \log \bc{ \frac{2}{\eps \delta} }.
\end{align}

Because one criterion is increasing in $\alpha$ while the other one is decreasing, the $\alpha$ minimizing both equations can be easily calculated. 
We find as an optimal value for $\alpha$ that
\begin{align*}
    \alpha^\star = \frac{L m - \log \bc{ \frac{2}{\eps \delta} } + \log \bc{ \frac{2}{\eps \delta p} }}{2 L m} = \frac{1}{2} + \frac{\log \bc{\frac{1}{p}}}{2 L m}
\end{align*}
Because $\alpha$ needs to be between zero and one, we directly require
\begin{align} \label{bound_m_1}
    m > \frac{\log \bc{\frac{1}{p}}}{L}
\end{align}
Consequently, the algorithm's threshold reads
\begin{align}
     T_\alpha = \Delta_j S(1,1) - \bc{\frac{1}{2} - \frac{\log \bc{\frac{1}{p}}}{2 L m} } \Delta_j (S(1,1) - S(0,1)). \label{eq_thresh_algo}
\end{align}
Because
\begin{align*}
    N_{\alpha^\star} = (\alpha^\star)^2  L m \quad \text{and} \quad P_{\alpha^\star} &= (1 - \alpha^\star)^2  L m,
\end{align*}
we have
\begin{align*}
    N_{\alpha^\star} = \bc{\frac{1}{2} + \frac{\log \bc{\frac{1}{p}}}{2 L m}}^2 L m
    = \frac{ 1}{4}L m + \frac{1}{2}\log \bc{\frac{1}{p}} + \frac{1}{4 L m}\log \bc{\frac{1}{p}}^2,\\
\end{align*}
and 
\begin{align*}
    P_{\alpha^\star} = \bc{\frac{1}{2} -\frac{\log \bc{\frac{1}{p}}}{2 L m}}^2 L m
    = \frac{ 1}{4}L m - \frac{1}{2}\log \bc{\frac{1}{p}} + \frac{1}{4 L m}\log \bc{\frac{1}{p}}^2.\\
\end{align*}
As a direct consequence, we obtain \cref{bound_m_2} from \eqref{eq_sufficient_np}:
\begin{align*}
    m > \frac{1}{L}\left(\ln(1/p) + 2\ln(2/(\eps\delta)) + 2\sqrt{\ln(2/(\eps\delta))\ln(2/(\eps \delta p))}\right) .
\end{align*}
Observe that \eqref{bound_m_2} does outnumber \eqref{bound_m_1} and is therefore a sufficient condition. 

\section{Discussion and Conclusion}
\label{sec:discussion}
We studied the approximate recovery from pooled data under a very generic noise model and state a performance guarantee for a distributed combinatorial algorithm.
In the noiseless case when the channel-matrix $S$ becomes the identity matrix our bound on the required number of queries in \cref{thm_mainresult} boils down to 
{\dense \[m \sim 2 k \bc{ \log\left(\frac{n}{k}\right) + 2\ln\left(\frac{2}{\eps\delta}\right) + 2\sqrt{\ln\left(\frac{2}{\eps\delta}\right)\ln\left(\frac{2n}{\eps \delta k}\right)} } , \]}where we set $p = k/n$.
In the sublinear regime where the number of agents with a hidden bit one scales as is $k = n^{\theta}$ for some $\theta \in (0,1))$ our result aligns well with the existing literature. 
As previously stated, the algorithm was already analyzed under exact recovery.
More precisely, \citet{gebhard_2022_ipdps, hahnklimroth_2022_icdcs} prove that for exact recovery $ m \sim 2.43 { \bc{1 + \sqrt{\theta}}^2 } k \log(n)$ is sufficient, and their proof reveals that this is also necessary \cite{gebhard_2022_ipdps}. 
In our paper we obtain $ m \sim 2 (1 - \theta) k \log(n) $ for approximate recovery.
The leading constant is much smaller than in the exact recovery task. 
A similar observation holds with respect to the analysis in \cite{hahnklimroth_2022_icdcs} for exact recovery under noise.
Again, the approximate variant decreases the required constant in front of $k \log n$ significantly if $p$ becomes larger, but the order of magnitude is preserved.  

In the empirical analysis we observe three facts that might be surprising at first glance.
First, multi-edges in the pooling graph are harmful to the algorithm's performance.
Second, the degree of regularity has only a vanishing influence on the number of queries required in the noiseless variant.
Third, the degree of regularity becomes relevant if the queries are subject to noise.

Regarding the existence of multi-edges, asymptotic results show that given multi-edges the algorithm should require slightly fewer queries.
More precisely, it is known \cite{gebhard_2022_ipdps,hahnklimroth_2022_icdcs} that a multiplicative factor of $2(1 - \exp(-1/2)) \sim 0.8$ in the number of required queries arises in exact recovery tasks.
A similar calculation could be carried out in our analysis: the factor $\Delta^\star \Delta^{-1} = 1 + o(1)$ appears in our formal analysis as well.  
Of course, this observation is with respect to a solely asymptotic analysis in which, for instance, $\sqrt{m \log m} \ll m$ is required (see \cite{hahnklimroth_2022_icdcs}).
It is not likely that such assumptions hold for realistic instances.
Furthermore, the appearance of the multi-edges reduces the entropy of the test design significantly for smaller values of $n$.
It is known in the related group testing problem that less queries are needed if the system's entropy rises \cite{Noonan_2021}.
This is a plausible explanation for the different behavior on small samples in contrast to asymptotic considerations.

Moreover, in group testing it was also observed that similar thresholding approaches work better on doubly regular designs for small $n$ \cite{tan_2022}.
It is also known that one-sided regular designs and doubly regular designs outperform the simplistic Bernoulli variant substantially \cite{aldridge_2017_a, aldridge_2016}.
In contrast, in the pooled data problem there is no difference in the performance of the different designs from an information-theoretic point of view \cite{feige2020quantitative, gebhard_2022_ipdps}.
This is due to the high density of the pooling graph compared to the one of group testing: all agents are, more or less, exchangeable in the pooled data graphs such that the entropy of the different models is comparable.
Therefore it is well explainable that no strong differences are observed.

Finally, we observed that if noise arises the regularization of the pooling scheme helps significantly.
Indeed, if two agents join differently many queries, their chance to be read falsely at least once is different.
Therefore, in non-regular designs, the single agents are not exchangeable anymore and the system's entropy decreases for small $n$.
This additional variation appearing in non-regular designs seems to be the reason why doubly regular designs perform better.

\paragraph{Summary.}
\label{sec:conclusions}
The proven performance guarantee is uniform in the expected number of agents with hidden bit one.
It was found that substantially fewer queries are required using the same algorithm as necessary for exact recovery.
Guided by recent findings in group testing, we introduced a doubly regular pooling design.
Our formal proof is designed explicitly to this scheme but straight-forward adjustments would yield the same bounds on dense Bernoulli designs or one-sided regular schemes.
Furthermore, our empirical analysis reveals that on small systems, designs that allow agents to join queries multiple times are inferior to corresponding designs in which this is not allowed.
Moreover, doubly regular designs outperform non-regular designs if the queries are subjected to noise but do not improve upon the number of queries required in the noiseless variant.

One obvious follow-up question is whether the agents' bits could be learned approximately even better with the use of neural networks instead of plain combinatorial constructions.
A second open question relates to an extension of the model: in the pooled data problem we assume a query to report the exact number of agents with bit one (up to noise). In \emph{semi-quantitative group testing}, much less information is given: a query outputs one specific value for a whole range of inputs. It is exciting future research whether a similar approach can be carried out under this model.

\bibliographystyle{abbrvnat}
\bibliography{bibliography}

\end{document}